\def\eqref#1{equation~\ref{#1}}
\def\1{\bm{1}}
\DeclareMathAlphabet{\mathsfit}{\encodingdefault}{\sfdefault}{m}{sl}
\SetMathAlphabet{\mathsfit}{bold}{\encodingdefault}{\sfdefault}{bx}{n}
\newcommand{\setArmsmo}{\{1,\ldots,K\}}
\newcommand{\estCumulGainVector}{\bm{\widetilde B} }
\newcommand{\cumulGain}{\bm{ B} }
\newcommand{\gap}{\Delta}
\newcommand{\nArms}{K}
\newcommand{\estGainVector}{\bm{ \tilde \beta}}
\newcommand{\gainVector}{\bm{\beta}}
\newcommand{\timeHorizon}{T }
\newcommand{\Pro}{P}
\newcommand{\agent}{\mathcal{A}}
\newcommand{\loss}{\mathcal{L}}
\newcommand{\opWeights}{\bm{w}}
\newcommand{\archi}{\mathcal{Z}}
\newcommand{\Real}{\mathbb{R}}
\newcommand{\our}{MANAS\xspace}
\begin{document}

\title{MANAS: Multi-Agent Neural Architecture Search
}


\author{%
    Vasco Lopes \and
    Fabio Maria Carlucci   \and
    Pedro M Esperan\c{c}a \and
    Marco Singh \and
    Antoine Yang \and
    Victor Gabillon \and
    Hang Xu \and
    Zewei Chen \and
    Jun Wang
}


\institute{V. Lopes, \at
NOVA Lincs, Universidade da Beira Interior 
\and
F.M. Carlucci, P.M. Esperan\c{c}a, M. Singh, A. Yang, V. Gabillon, X. Hang, Z. Chen \at
              Huawei Noah's Ark Lab \\
              \email{fabiom.carlucci@gmail.com}           
           \and
           J. Wang \at
              University College London
}

\date{Received: date / Accepted: date}

\maketitle


\begin{abstract}
The Neural Architecture Search (NAS) problem is typically formulated as a graph search problem where the goal is to learn the optimal operations over edges in order to maximize a graph-level global objective. Due to the large architecture parameter space, efficiency is a key bottleneck preventing NAS from its practical use. In this work, we address the issue by framing NAS as a multi-agent problem where agents control a subset of the network and coordinate to reach optimal architectures.
We provide two distinct lightweight implementations, with reduced memory requirements ($1/8$th of state-of-the-art), and performances above those of much more computationally expensive methods.
Theoretically, we demonstrate vanishing regrets of the form $\mathcal{O}(\sqrt{T})$, with $T$ being the total number of rounds. 
Finally, we perform experiments on CIFAR-10 and ImageNet, and aware that random search and random sampling are (often ignored) effective baselines, we conducted additional experiments on $3$ alternative datasets, with complexity constraints, and $2$ network configurations, and achieve competitive results in comparison with the baselines and other methods.
\keywords{Neural Architecture Search \and Multi Arm Bandits \and AutoML \and Computer Vision \and Object Recognition}
\end{abstract}

\section{Introduction}
\label{intro}

Determining an optimal architecture is key to accurate deep neural networks (DNNs) with good generalisation properties \cite{Szegedy2017_inception,Huang2017_densenet,He2016_resnet,Han2017_pyramidnet,Conneau2017_VDLCNN,Merity2018_LSTM}. Neural architecture search (NAS), which has been formulated as a graph search problem, can potentially reduce the need for application-specific expert designers allowing for a wide-adoption of sophisticated networks in various industries. \cite{ZophLe17_NAS} presented the first modern algorithm automating structure design, and showed that resulting architectures can indeed outperform human-designed state-of-the-art convolutional networks \cite{Ko2019,Liu2019_DARTS}. 
However, even in the current settings where flexibility is limited by expertly-designed search spaces, NAS problems are computationally very intensive with early methods requiring hundreds or thousands of GPU-days to discover state-of-the-art architectures \cite{ZophLe17_NAS,Real2017_EvoNAS,Liu2018_PNAS,Liu18_HEVO}.




Researchers have used a wealth of techniques ranging from reinforcement learning, where a controller network is trained to sample promising architectures~\cite{ZophLe17_NAS,Zoph2018_NASNet,Pham2018_ENAS,bender2020can}, to evolutionary algorithms that evolve a population of networks for optimal DNN design~\cite{Real2018_AmoebaNet,Liu18_HEVO,lopes2022towards}, to optimization on random graphs \cite{ru2020nago}. Alas, these approaches are inefficient and can be extremely computationally and/or memory intensive as some require all tested architectures to be trained from scratch.
Weight-sharing, introduced in ENAS~\cite{Pham2018_ENAS}, can alleviate this problem.
Even so, these techniques cannot easily scale to large datasets, e.g., ImageNet, relying on human-defined heuristics for architecture transfer.
Recently, low-fidelity estimates, performance predictors and guiding mechanisms have also been studied to improve the search cost and reduce the memory and computation required \cite{mellor2021neural,lopes2021epe,white2021bananas,ning2021evaluating,white2021powerful,lopes2022guided}. More, gradient-based frameworks enabled efficient solutions by introducing a continuous relaxation of the search space. For example, DARTS \cite{Liu2019_DARTS} uses this relaxation to optimise architecture parameters using gradient descent in a bi-level optimisation problem, while SNAS \cite{Xie19_SNAS} updates architecture parameters and network weights under one generic loss.
Still, due to memory constraints the search has to be performed on $8$ cells, which are then stacked $20$ times for the final architecture. 
This solution is a coarse approximation to the original problem as shown in Section~\ref{sec:exp} 
of this work and in \cite{Yang2020NASEFH,Sciuto2019_random,Li2019_random}, receiving some criticisms outlined in \cite{Zela2020Understanding,Yang2020NASEFH,wan2022redundancy}.
In fact, we show that searching directly over 20 cells leads to a reduction in test error (8\% relative to~\cite{Liu2019_DARTS}).
ProxylessNAS \cite{Cai2018_proxylessNAS} is one exception, as it can search for the final models directly; nonetheless they still require twice the amount of memory used by our proposed algorithm, while offering no theoretical guarantees.

To enable the possibility of large-scale joint optimisation of deep architectures we contribute MANAS, the first multi-agent learning algorithm for neural architecture search.
Our algorithm combines the memory and computational efficiency of multi-agent systems, which is achieved through action coordination with the theoretical rigour of online machine learning, allowing us to balance exploration versus exploitation optimally. 
Due to its distributed nature, MANAS enables large-scale optimisation of deeper networks while learning different operations per cell. Theoretically, we demonstrate that MANAS implicitly coordinates learners to recover vanishing regrets, guaranteeing convergence.
Empirically, we show that our method achieves state-of-the-art accuracy results among methods using the same evaluation protocol but with significant reductions in memory (1/8th of \cite{Liu2019_DARTS}) and search time (70\% of \cite{Liu2019_DARTS}).

The multi-agent (MA) framework is inherently scalable and allows us to tackle an optimization problem that would be extremely challenging to solve efficiently otherwise: the search space of a single cell is $8^{14}$ and there is no fast way of learning the joint distribution, as needed by a single controller. More cells to learn exacerbates the problem, and this is why MA is required, as for each agent the size of the search space is always constant.

In short, our contributions can be summarised as: 
(1) framing NAS as a multi-agent learning problem (MANAS) where each agent supervises a subset of the network; agents coordinate through a credit assignment technique which infers the quality of each operation in the network, without suffering from the combinatorial explosion of potential solutions.  
(2) Proposing two lightweight implementations of our framework that are theoretically grounded. The algorithms are computationally and memory efficient, and achieve state-of-the-art results on CIFAR-10 and ImageNet when compared with competing methods. Furthermore, \our allows search \textit{directly} on large datasets (e.g. ImageNet).
(3) Presenting $3$ news datasets for NAS evaluation to minimise algorithmic overfitting; offering a fair comparison with the often ignored random search \cite{Li2019_random} and random sampling \cite{Yang2020NASEFH,Sciuto2019_random} baselines; and presenting a complexity constraint analysis of \our.

\section{Related work}

\our derives its search space from DARTS \cite{Liu2019_DARTS} and is therefore most related to other gradient-based NAS methods that use the same search space.
SNAS \cite{Xie19_SNAS} appears similar at a high level, but has important differences: 1) it uses GD to learn the architecture parameters. This requires a differentiable objective (MANAS does not) and leads to 2) having to forward all operations (see their Eqs.5,6), thus negating any memory advantages (which MANAS has), and effectively requiring repeated cells and preventing search on ImageNet. Sequent gradient-base proposals improve upon baselines by introducing regularization mechanisms to improve the final performance of the generated architectures \cite{Zela2020Understanding,DBLP:conf/icml/ChenH20,DBLP:conf/iclr/ChuW0LWY21}, whilst still suffering from the aforementioned problems.

ENAS \cite{Pham2018_ENAS} is also very different: its use of RL implies dependence on past states (the previous operations in the cell). It explores not only the stochastic reward function but also the relationship between states, which is where most of the complexity lies.
Furthermore, RL has to balance exploration and exploitation by relying on sub-optimal heuristics, while MANAS, due to its theoretically optimal approach from online learning, is more sample efficient.
Finally, ENAS uses a single LSTM (which adds complexity and problems such as exploding/vanishing gradients) to control the entire process, and is thus following a monolithic approach.
Indeed, at a high level, our multi-agent framework can be seen as a way of decomposing the monolithic controller into a set of simpler, independent sub-policies. This provides a more scalable and memory efficient approach that leads to higher accuracy, as confirmed by our experiments.

\section{Preliminary: Neural Architecture Search}
\label{sec:nasSearch}
We consider the NAS problem as formalised in DARTS ~\cite{Liu2019_DARTS}.
At a higher level, the architecture is composed of a \emph{computation cell} that is a building block to be learned and stacked in the network.
The cell is represented by a directed acyclic graph with $V$ nodes and $N$ edges; edges connect all nodes $i,j$ from $i$ to $j$ where $i<j$.
Each vertex $\bm{x}^{(i)}$ is a latent representation 
for $i\in\{1,\ldots,V\}$.
Each directed edge $(i,j)$ (with $i<j$) is associated with an operation $o^{(i,j)}$ that transforms $\bm{x}^{(i)}$.
Intermediate node values are computed based on all of its predecessors as
$
\bm{x}^{(j)} = \sum_{i<j} o^{(i,j)}(\bm{x}^{(i)}).
$
For each edge, an architect needs to intelligently select one operation $o^{(i,j)}$ from a finite set of $K$ operations, $\mathcal{O} = \{o_{k}(\cdot)\}^K_{k=1}$,
where operations represents some function to be applied to $\bm{x}^{(i)}$ to compute $\bm{x}^{(j)}$, e.g., convolutions or pooling layers.
To each $o_{k}^{(i,j)}(\cdot)$ is associated a set of operational weights  $w_{k}^{(i,j)}$ that needs to be learned (e.g. the weights of a convolution filter).
Additionally, a parameter $\alpha_{k}^{(i,j)}\in\mathbb{R}$
characterises the importance of operation $k$ within the pool $\mathcal{O}$ for edge $(i,j)$.
The sets of all the operational weights $\{w_{k}^{(i,j)}\}$ and architecture parameters (edge weights) $\{\alpha_{k}^{(i,j)}\}$ are denoted by $\bm{w}$ and $\bm{\alpha}$, respectively.
DARTS defined the operation $\bar{o}^{(i,j)}(\bm{x})$ as
\begin{equation}\label{eq:darts}
\bar{o}^{(i,j)}(\bm{x})
= \sum_{k=1}^K  \frac{e^{\alpha_{k}^{(i,j)}}}{\sum_{k'=1}^K e^{\alpha_{k'}^{(i,j)}}} \cdot o^{(i,j)}_k(\bm{x})
\end{equation}
in which $\bm{\alpha}$ encodes the network architecture; and
the optimal choice of architecture is defined by 
%
\begin{align}\label{eq:Optimisation}
\bm{\alpha}^\star = \min_{\bm{\alpha}}
\mathcal{L}^{(\text{val})}( \bm{\alpha},\bm{w}^\star(\bm{\alpha}))
\quad
\text{s.t.}
\quad
\bm{w}^\star(\bm{\alpha}) = \arg\min_{\bm{w}} \mathcal{L}^{( \text{train})}(\bm{\alpha},\bm{w} ).
\end{align}
The  final objective is to obtain a \emph{sparse} architecture $\archi^\star = \{\archi^{(i,j)}\}, 
\forall i,j$ where $\archi^{(i,j)}=[z_{1}^{(i,j)},
\dots,z_{K}^{(i,j)}]$ with $z_{k}^{(i,j)}=1$ for $k$ corresponding to the best operation and $0$ otherwise.
That is, for each pair $(i,j)$ a \emph{single operation} is selected. 

\begin{figure}
\centering
\includegraphics[trim={2cm 3cm 5cm 2cm},clip, width = 1\textwidth]{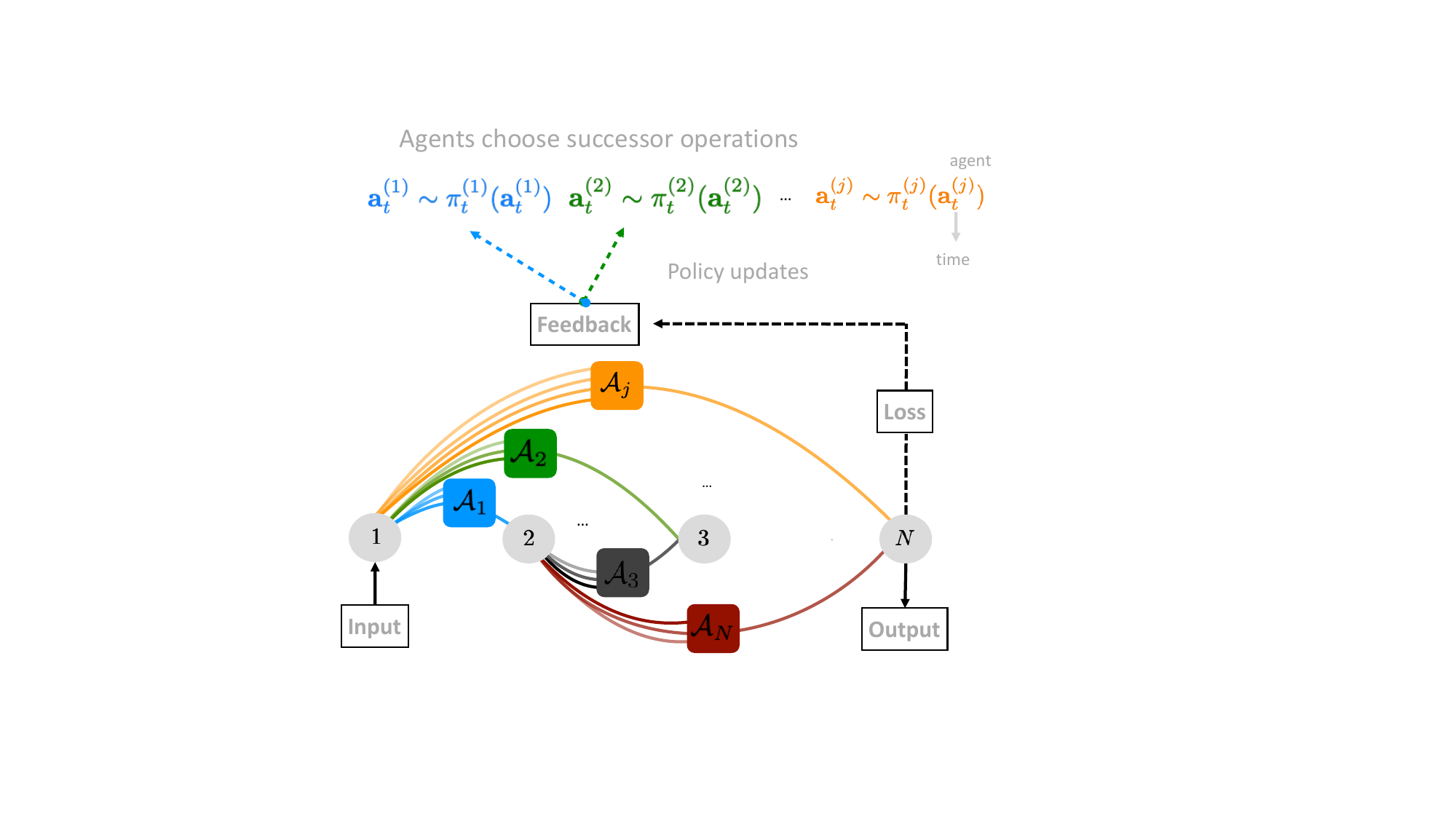}
\caption{MANAS with single cell. Between each pair of nodes, an agent $\mathcal{A}_i$ selects action $a^{(i)}$ according to $\pi^{(i)}$. Feedback from the validation loss is used to update the policy. 
}
\label{Fig:MASAUTOML}
\end{figure}

\section{Online Multi-agent Learning for AutoML}
\label{sec:maformu}

NAS suffers from a combinatorial explosion in its search space. A recently proposed approach to tackle this problem is to approximate the discrete optimisation variables (i.e., edges in our case) with continuous counterparts and then use gradient-based optimisation methods.
DARTS~\cite{Liu2019_DARTS} introduced this method for NAS, though it suffers from two important drawbacks. 
First, the algorithm is memory and computationally intensive ($\mathcal{O}(NK)$ with $K$ being total number of operations between a pair of nodes and $N$ the number of nodes) as it requires loading all operation parameters into GPU memory. As a result, DARTS only optimises over a small subset of $8$ repeating cells, which are then stacked together to form a deep network of $20$. Naturally, such an approximation is bound to be sub-optimal.
Second, evaluating an architecture on validation requires the optimal set of network parameters. Learning these, unfortunately, is highly demanding since for an architecture $\mathcal{Z}_{t}$, one would like to compute $\mathcal{L}^{(\text{val})}_{t}\left(\archi_t,\bm{w}^{\star}_{t}\right)$ 
where $\bm{w}^{\star}_{t} = \arg\min_{\bm{w}} \mathcal{L}_{t}^{(\text{train})}(\bm{w}, \mathcal{Z}_{t})$. 
DARTS, uses \emph{weight sharing} that updates $\bm{w}_t$ once per architecture, with the hope of tracking $\bm{w}^{\star}_t$ over learning rounds. Although this technique leads to significant speed up in computation, it is not clear how this approximation affects the validation loss function. 

Next, we detail a novel methodology based on a combination of multi-agent and online learning to tackle the above two problems (Figure~\ref{Fig:MASAUTOML}). Multi-agent learning scales our algorithm, reducing memory consumption by an order of magnitude from $\mathcal{O}(NK)$ to $\mathcal{O}(N)$; and online learning enables rigorous understanding of the effect of tracking $\bm{w}^{\star}_{t}$ over rounds.

\subsection{NAS as a multi-agent problem}

To address the computational complexity we use the weight sharing technique used in DARTS.
However, we handle in a more theoretically grounded way the effect of approximating $\mathcal{L}^{(\text{val})}_{t}\left(\archi_t, \bm{w}^{\star}_{t}\right)$ by  
$\mathcal{L}^{(\text{val})}_{t}\left(\archi_t, \bm{w}_{t}\right)$. 
Indeed, such an approximation can lead to arbitrary bad solutions due to the uncontrollable weight component.
To analyse the learning problem with no stochastic assumptions on the process generating $\nu = \{\mathcal{L}_{1}, \dots, \mathcal{L}_{T}\}$ we adopt an adversarial online learning framework.

\begin{algorithm}[!ht]
	\begin{algorithmic}[1]

\State 	\textbf{Initialize: } $\pi^{i}_1$ is uniform random over all $j\in \{1,\ldots N\}$. And random $\bm{w}_1$ weights.

\State 	\textbf{For} $t=1,\ldots, T$
	
\State 	 $\quad$ {*} Agent $\agent_i$ samples $\bm{a}_{t}^{i} \sim \pi_{t}^{i}(\bm{a}_{t}^{i})$ for all $i\in\{1,\ldots,N\}$, forming architecture $\archi_t$.\label{lst:line:sample}

\State 	 $\quad$ Compute the training loss 
$\loss^{(\text{train})}_t(\bm{a}_t) =\loss^{(\text{train})}_t(\archi_t,\bm{w}_t )$

\State 	 $\quad$ Update $\bm{w}_{t+1}$ for all operation $\bm{a}_{t}^{i}$ in $\archi_t$ from $\bm{w}_{t}$ using back-propagation.

\State 	$\quad$ Compute the validation loss
$\loss^{(\text{val})}_t(\bm{a}_t) =\loss^{(\text{val})}_t(\archi_t,\bm{w}_{t+1} )$ 
	
\State 	 $\quad$ {\textbf{*} Update  $\pi^{i}_{t+1}$} for all $i\in \{1,\ldots N\}$ using $\archi_1,\ldots, \archi_{t}$ and  $\loss^{(\text{val})}_1,\ldots, \loss^{(\text{val})}_{t}$.\label{lst:line:update}
		
\State 	\textbf{Recommend} $\archi_{T+1}$, after round $T$,   where  $\bm{a}_{T+1}^{i} \sim \pi_{T+1}^{i}(\bm{a}_{T+1}^{i})$ for all $i\in\{1,\ldots,N\}$.
	
\caption{
\textsc{General Framework:}
[steps with asterisks (\textbf{*}) are specified in section~\ref{Sec:Sol}]
}
\label{fig:genAl}
\end{algorithmic}
\end{algorithm}

\paragraph{NAS as Multi-Agent Combinatorial Online Learning. }
In Section~\ref{sec:nasSearch}, we defined a NAS problem where one out of $K$ operations needs to be recommended for each pair of nodes $(i,j)$ in a DAG. In this section, we associate \emph{each pair} of nodes with an \emph{agent} in charge of exploring and quantifying the quality of these $K$ operations, to ultimately recommend one.
The only feedback for each agent is the loss that is associated with a global architecture $\archi$, which depends on all agents' choices.

We introduce $N$ decision makers, $\mathcal{A}_{1}, \dots, \mathcal{A}_{N}$ (see Figure~\ref{Fig:MASAUTOML} and Algorithm~\ref{fig:genAl}). 
At training round $t$, each agent chooses an operation (e.g., convolution or pooling filter) according to its local action-distribution (or policy) $\bm{a}_{t}^{j} \sim \pi_{t}^{j}$, for all $j \in \{1, \dots, N\}$ with $\bm{a}_{t}^{j}\in \{1, \dots, K\}$. These operations have corresponding operational weights $\bm{w}_t$ that are learned in parallel. Altogether, these choices $\bm{a}_{t}=\bm{a}_{t}^{1}, \dots, \bm{a}_{t}^{N}$ define a  sparse graph/architecture $\archi_t\equiv \bm{a}_{t}$ for which a validation loss  $\mathcal{L}^{(\text{val})}_{t}\left(\archi_t,\bm{w}_t\right)$ is computed and used by the agents to update their policies. 
After $T$ rounds, an architecture is recommended by sampling $\bm{a}_{T+1}^{j} \sim \pi_{T+1}^{j}$, for all $j \in \{1, \dots, N\}$.
These dynamics resemble bandit algorithms where the actions for an agent $\mathcal{A}_{j}$ are viewed as separate arms. 
This framework leaves open the design of \textbf{1)} the sampling strategy  $\pi^{j}$ and \textbf{2)} how  $\pi^{j}$ is updated from the observed loss.

\paragraph{Minimization of worst-case regret under any loss. }
The following two notions of regret motivate our proposed NAS method.
Given a policy $\pi$ the  \emph{cumulative regret} $\mathcal{R}_{T,\pi}^{\star} $ and the  \emph{simple regret} $ r_{T,\pi}^{\star}$ after $T$ rounds and under the worst possible environment $\nu$,  are:
\begin{align}\label{cregret_def} 
    \mathcal{R}_{T,\pi}^{\star} &= \sup_{\nu}  \mathbb{E}\sum_{t=1}^{T} \mathcal{L}_{t}(\bm{a}_{t}) - \min_{\bm{a}}\sum_{t=1}^{T}\mathcal{L}_{t}(\bm{a}),\\
    \label{sregret_def} 
      r_{T,\pi}^{\star} &=  \sup_{\nu} \mathbb{E}\sum_{t=1}^{T} \mathcal{L}_{t}(\bm{a}_{T+1}) - \min_{\bm{a}}\sum_{t=1}^{T}\mathcal{L}_{t}(\bm{a})
\end{align}
where the expectation is taken over both the losses and policy distributions, and $\bm{a} = \{\bm{a}^{(\mathcal{A}_j)}\}^{N}_{j=1}$ denotes a joint action profile. 
The simple regret leads to minimising the loss of the recommended architecture $a_{T+1}$, while minimising the cumulative regret adds the extra requirement of having to sample, at any time $t$, architectures with close-to-optimal losses. We  discuss in the appendix~\ref{sec:discussion} how this requirement could improve in practice the tracking of $\bm{w}_t^\star$ by $\bm{w}_t$.
We let $\mathcal{L}_{t}(\bm{a}_{t})$ be potentially adversarilly designed to account for the difference between $\bm{w}_t^\star$ and $\bm{w}_t$ and make no assumption on its convergence. Our models and solutions in Section~\ref{Sec:Sol} are designed to be robust to arbitrary $\mathcal{L}_{t}(\bm{a}_{t})$.

Because of the discrete nature of the NAS problem, during search the loss can take on large values or alternate between large and small values arbitrarily. Gradient-descent methods perform best under smooth loss functions, which is not the case in NAS.
The worst-case regret minimization is a theoretically-grounded objective which we make use of in order to provide guarantees on the convergence of the algorithm when no assumptions are made on the process generating the losses.

\section{Adversarial Implementations}\label{Sec:Sol}
%
In the following subsections we will describe our proposed approaches for NAS when considering adversarial losses. We present two algorithms, MANAS and MANAS-LS, that implement two different \emph{credit assignment techniques}  specifying the update rule in line~\ref{lst:line:update} of Algorithm~\ref{fig:genAl}.
The first  approximates the validation loss as a linear combination of edge weights, while the second handles non-linear losses.

Note that \textit{adversarial} in this context refers to the adversarial multi-arm bandit \cite{auer1995gambling} framework: we  model the fact that a weight-sharing supernetwork returns noisy rewards as having an adversary that explicitly tries to confuse the learner.
Adversarial multi-arm bandit is the strongest generalization of the bandit problem, as it removes all assumptions on the distribution.
Our MA formulation and algorithm explicitly account for this adversarial nature and provide a principled solution that is provably robust.

\subsection{MANAS-LS}

\paragraph{Linear Decomposition of the Loss. }
A simple credit assignment strategy is to approximate edge-importance (or edge-weight) by a vector $\bm{\beta}_{s}\in\mathbb{R}^{KN}$ representing the importance of all $K$ operations for each of the $N$ agents. 
$\bm{\beta}_{s}$ is an arbitrary, potentially adversarially-chosen vector and varies with time $s$ to account for the fact that the operational weights $\bm{w}_{s}$ are learned online and to avoid any restrictive assumption on their convergence.
The relation between the observed loss $\mathcal{L}_{s}^{(\text{val})}$ and the architecture selected at each sampling stage $s$ is modeled through a linear combination of the architecture's edges (agents' actions) as
\begin{equation}
    \mathcal{L}_{s}^{(\text{val})} 
    = \bm{\beta}_{s}^{\mathsf{T}} \bm{Z}_{s}
\end{equation}
where $\bm{Z}_{s}\in\{0,1\}^{KN}$ is a vectorised one-hot encoding of the architecture $\mathcal{Z}_{s}$ (active edges are 1, otherwise 0).
After evaluating $S$ architectures, at round $t$ we estimate $\bm{\beta}$ by solving the following via least-squares: 
\begin{equation}
\text{Credit assignment:~~~}
\bm{ \widetilde B}_{t} = \text{arg}\min_{\bm{\beta}} 
\sum_{s=1}^{S} 
\left(\mathcal{L}_{s}^{(\text{val})} - \bm{\beta}^{\mathsf{T}}  \bm{Z}_{s}\right)^{2}.
\end{equation}
The solution gives an efficient way for agents to update their corresponding action-selection rules  and leads to implicit coordination. 
Indeed, in Appendix~\ref{app:facReg} we demonstrate that the worst-case regret $\mathcal{R}^\star_{T}$~(\ref{cregret_def})  can actually be decomposed into an agent-specific form $\mathcal{R}^{i}_T\left(\boldsymbol{\pi}^{i},\nu^{i}\right)$ defined in the appendix:
$
    \mathcal{R}^\star_{T} = \sup_{\nu}\mathcal{R}_{T}(\bm{\pi}, \nu) \iff   \sup_{\nu^{i}}\mathcal{R}^{i}_T\left(\boldsymbol{\pi}^{i},\nu^{i}\right), \ \ \ i=1,\ldots,N$.
This decomposition allows us to significantly reduce the search space complexity by letting each agent $\mathcal{A}_i$ determine the best operation for the corresponding graph edge.


\paragraph{Zipf Sampling for $r_{T,\pi}^{\star}$. }
$\mathcal{A}_i$ samples an operation $k$ proportionally to the inverse of its estimated rank $\widetilde{ \langle k \rangle }^{i}_t$, where $\widetilde{ \langle k \rangle }^{i}_t$ is computed by sorting the operations
of agent $\mathcal{A}_i$ w.r.t $\tilde{\boldsymbol{B}}^{i}_t[k] $, as
\[
 \text{Sampling policy:~~~}
\boldsymbol{\pi}^{i}_{t+1}[k] 
  = 1
  \Big/
  \widetilde{ \langle k \rangle }^{i}_t\overline{\log} K \quad \text{ where } \overline{\log} K = 1+1/2+\ldots+1/K.
\]
Zipf explores efficiently, 
is anytime, parameter free, minimises optimally the simple regret in   multi-armed bandits when the losses are adversarially designed and adapts optimally to stationary losses~\cite{abbasi2018best}.

We prove for this new algorithm an exponentially decreasing simple regret $r^\star_{T}= \mathcal{O}\left(e^{-T/H}\right)$, where $H$ is a measure of the complexity for discriminating sub-optimal solutions as $H=N(\min_{j\neq k^\star_i,1\leq i \leq N\} }\boldsymbol{B}^{i}_T[j]-\boldsymbol{B}^{i}_T[k^\star_i] )$, where $k^\star_i= \min_{1\leq j \leq K }\boldsymbol{B}^{i}_T[j])$ and $\boldsymbol{B}^{i}_T[j]=\sum_{t=1}^T \boldsymbol{\beta}^{(\mathcal{A}_i)}_t[j]$. 
The proof in given in Appendix~\ref{app:theoryLZ}.

\subsection{MANAS}

\paragraph{Coordinated Descent for Non-Linear Losses. }
\label{Sec:ACO}
In some cases the linear approximation may be crude.
An alternative is to make no assumptions on the loss function and have each agent directly associate the quality of their actions with the loss $\loss^{(\text{val})}_t(\bm{a}_t) $.
This results in all the agents performing a coordinated descent approach to the problem. Each agent updates for operation $k$ its $\bm{ \widetilde B}^{i}_{t}[k]$ as
\begin{equation}
\text{Credit assignment:~~}
\bm{ \widetilde B}^{i}_{t}[k] = \bm{ \widetilde B}^{i}_{t-1}[k]+ \loss^{(\text{val})}_t \cdot \mathbbm{1}_{ \bm{a}_{t}^{i} = k }/\boldsymbol{\pi}^{i}_t[k].
\label{eq:MANAS_credit}
\end{equation}

\paragraph{Softmax Sampling for $\mathcal{R}_{T,\pi}^{\star}$. }
Following EXP3~\cite{auer2002nonstochastic}, 
actions are sampled from a softmax distribution (with temperature $\eta$) w.r.t.  $\tilde{ \boldsymbol{B}}^{i}_t[k]$:
 %
\[
 \text{Sampling policy:~~}
 \boldsymbol{\pi}^{i}_{t+1}[k] = \exp\left(\eta 
 \tilde{\boldsymbol{B}}^{i}_t[k] \right)
 \Big/
 \sum_{j=1}^{K}\exp\left(\eta \tilde{\boldsymbol{B}}^{i}_t[j] \right).
 \]
Using this sampling strategy, EXP3 \cite{auer2002nonstochastic} is run for each agent in parallel. If the regret of each agent is computed by considering the rest of the agent as fixed, then each agent has regret $\mathcal{O}\left(\sqrt{TK\log K}\right)$ which sums over agents to $\mathcal{O}\left(N\sqrt{TK\log K}\right)$.
The proof is given in Appendix \ref{app:theoryLZ}

\paragraph{On credit assignment.}
Our MA formulation provides a gradient-free, credit assignment strategy. Gradient methods are more susceptible to bad initialisation and can get trapped in local minima more easily than our approach, which, not only explores more widely the search space, but makes this search optimally according to multi-armed bandit derived regret minimization. 
Concretely, MANAS can easily escape from local minima as the reward is scaled by the probability of selecting an action (Eq.~\ref{eq:MANAS_credit}). Thus, the algorithm has a higher chance of revising its estimate of the quality of a solution based on new evidence. This is important as one-shot methods (such as MANAS and DARTS) change the network---and thus the environment---throughout the search process. Put differently, MANAS' optimal exploration-exploitation allows the algorithm to move away from 
`good' solutions towards `very good' solutions that do not live in the former's proximity; in contrast, gradient methods will tend to stay in the vicinity of a `good' discovered solution.

\section{Experiments}
\label{sec:exp}

We (1) compare \our against existing NAS methods on the well established CIFAR-10 dataset; (2) evaluate \our on ImageNet; (3) compare \our, DARTS, Random Sampling and Random Search with WS \cite{Li2019_random} on $3$ new datasets (Sport-8, Caltech-101, MIT-67); and (4) evaluate \our with inference time as complexity constraint.
Descriptions of the datasets and details of the search are provided in the Appendix.
We report the performance of two algorithms, \our and \our-LS, as described in Section~\ref{Sec:Sol}. 
Note that, with the exception of results marked as \textit{+AutoAugment}, all experiments were run with the same final training protocol as DARTS \cite{Liu2019_DARTS}, for fair comparison.

\paragraph{Search Spaces.} 
We use the same CNN search space as \cite{Liu2019_DARTS}. Since \our is memory efficient, it can search for the final architecture without needing to stack \textit{a posteriori} repeated cells; thus, all our cells are unique.
For fair comparison, we use $20$ cells on CIFAR-10 and $14$ on ImageNet.
Experiments on Sport-8, Caltech-101 and MIT-67 in Section~\ref{sec:alt_expers} use both $8$ and $14$ cell networks.

\paragraph{Search Protocols.}
For datasets other than ImageNet, we use $500$ epochs during the search phase for architectures with $20$ cells, $400$ epochs for $14$ cells, and $50$ epochs for $8$ cells.
All other hyperparameters are as in \cite{Liu2019_DARTS}.
For ImageNet, we use $14$ cells and $100$ epochs during search.
In our experiments on the three new datasets we rerun the DARTS code to optimise an $8$ cell architecture; for $14$ cells we simply stacked the best cells for the appropriate number of times.

\paragraph{Synthetic experiment.}
To illustrate the theoretical properties of MANAS we apply it to the Gaussian Squeeze task, a problem where agents must coordinate their actions in order to optimize a global objective function that depends on the actions of each agent \cite{yang2018mean,colby2015}.
Specifically, $N$ homogeneous agents determine their individual actions $a^{(j)}$ to jointly optimize the objective 
$G(x) = x e^{-\frac{(x-\mu)^{2}}{\sigma^{2}}}$
where $x = \sum_{j=1}^{N} a^{(j)}$.
This synthetic setup has the same characteristics of the multi-agent NAS problem, namely a group of agents \textit{implicitly} coordinating their actions to achieve a global objective, and is therefore a good experiment to showcase the theoretical properties of the MANAS algorithm.

We confirm that (1) MANAS progresses steadily towards zero regret while the Random Search baseline struggles to move beyond the initial starting poin; (2) MANAS stays well within the theoretical cumulative regret bound (Figure~\ref{fig:gsd}).

\begin{figure}[!h]
\centering
\includegraphics[width=0.5\textwidth]{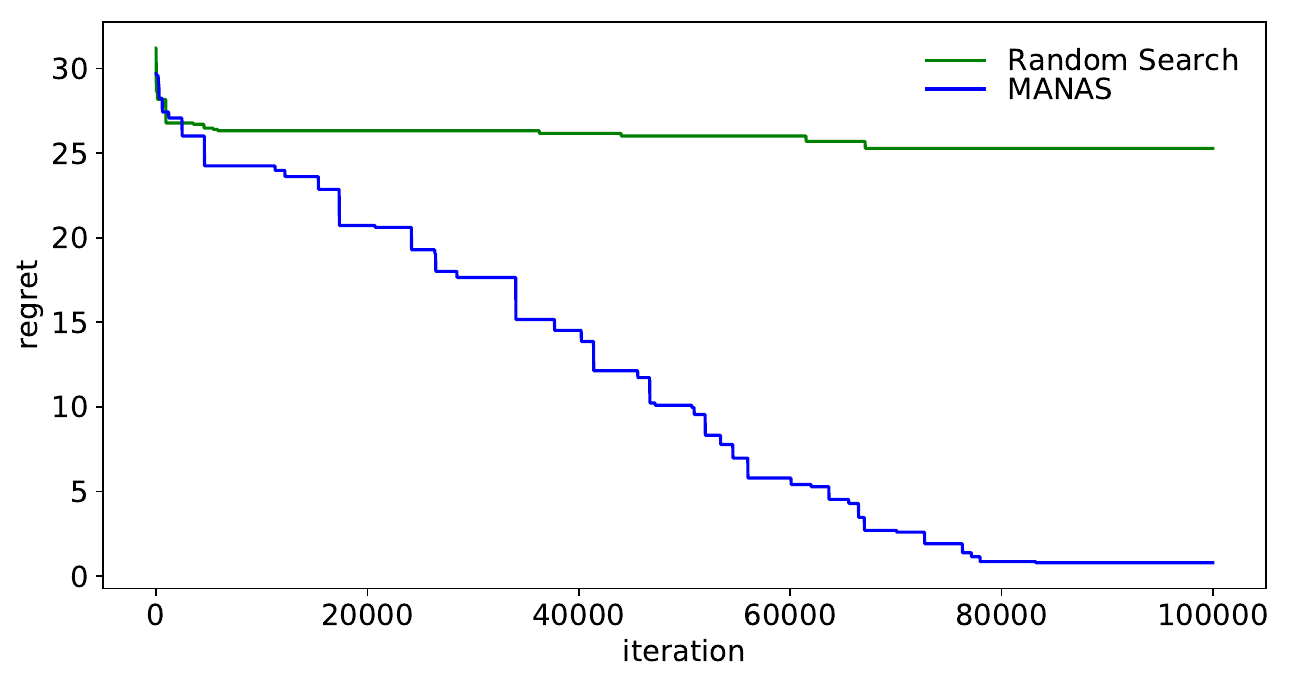}%
\includegraphics[width=0.5\textwidth]{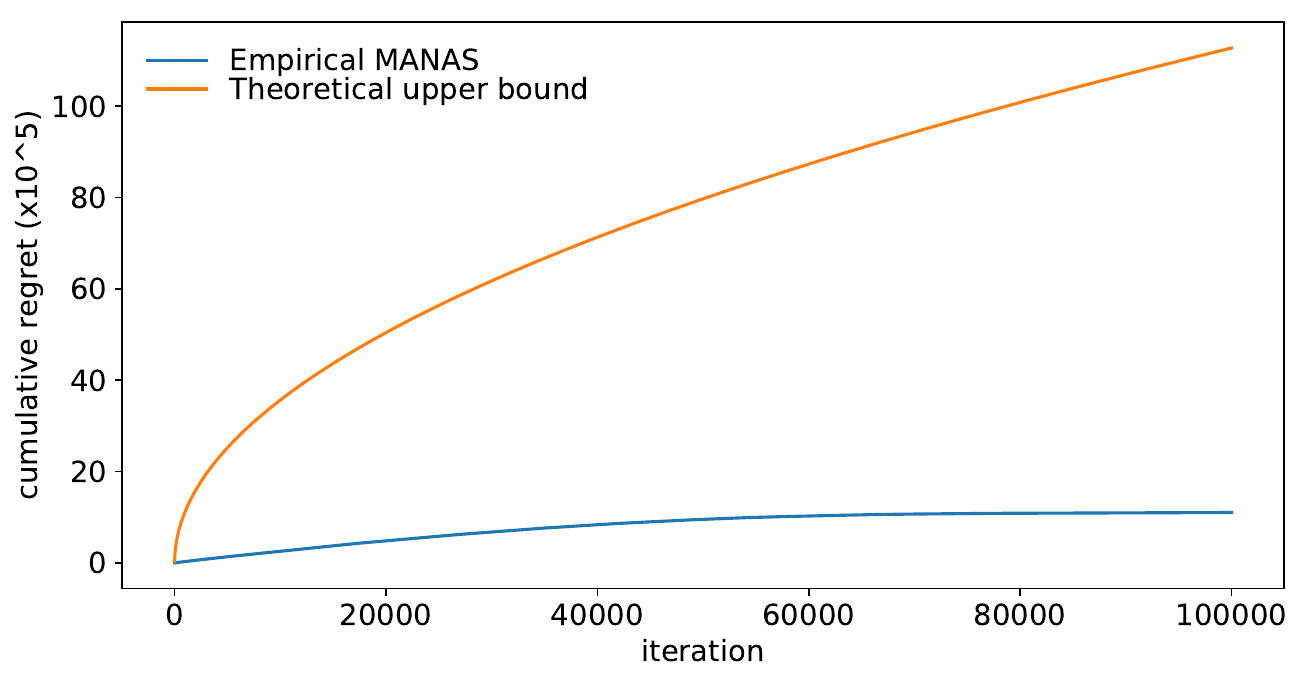}%
\caption{
Left: Regret for the Gaussian Squeeze Domain experiment with $100$ agents, $10$ actions, $\mu=1$, $\sigma=10$.
Right: Theoretical bound for the MANAS cumulative regret ($2N\sqrt{TK\log K}$; see Appendix~\ref{app:theoryCS}) and the observed counterpart for the Gaussian Squeeze Domain experiment with $100$ agents, $10$ actions, $\mu=1$, $\sigma=10$.
}
\label{fig:gsd}
\end{figure}

\begin{table}[t]
  \caption{Comparison with state-of-the-art image classifiers on CIFAR-10. The four row blocks represent: human-designed, NAS, MANAS search with DARTS training protocol and best searched MANAS retrained with extended protocol (AutoAugment + 1500 Epochs + 50 Channels). Unless specified, all architectures use 20 cells.}
  \label{table:Cifarresults}
  \centering
  \resizebox{\textwidth}{!}{
  \begin{tabular}{p{5.3cm}p{1.5cm}p{1cm}p{1.8cm}p{2.0cm}}
    \toprule
    Architecture   & Test Error ($\%$)     & Params (M) & Search Cost (GPU days) & Search Method\\
    \midrule
    DenseNet-BC \cite{Huang2017_densenet} & $3.46$ & $25.6$ & --- & manual \\
    \midrule
    NASNet-A \cite{Zoph2018_NASNet} & $2.65$ & $3.3$ & $1800$ & RL \\
    AmoebaNet-B \cite{Real2018_AmoebaNet} & $2.55$ & $2.8$ & $3150$ & evolution \\
    PNAS \cite{Liu2018_PNAS} & $3.41$ & $3.2$ & $225$ & SMBO \\
    ENAS \cite{Pham2018_ENAS} & $2.89$     & $4.6$ & $0.5$ & RL \\
    SNAS \cite{Xie19_SNAS} & $2.85$ & $2.8$ & $1.5$ & gradient \\
    DARTS, 1st order \cite{Liu2019_DARTS} & $3.00$ & $3.3$ & $1.5^{\dagger}$ & gradient     \\
    DARTS, 2nd order \cite{Liu2019_DARTS} & $2.76$ & $3.3$ & $4^{\dagger}$ & gradient      \\
    SDARTS-ADV \cite{DBLP:conf/icml/ChenH20} & $2.61$ & $3.3$ & $4.3$ & gradient      \\
    DARTS- \cite{DBLP:conf/iclr/ChuW0LWY21} & $2.63$ & $3.5$ & $4^{\dagger}$ & gradient      \\
    GDAS\cite{DBLP:conf/cvpr/DongY19} & $3.75$ & $3.4$ & $0.17$ & gradient      \\
    NPENAS-BO\cite{wei2022npenas} & $2.52$ & $4.0$ & $2.5$ & evolution      \\
    EffPNet\cite{wang2021surrogate} & $3.49$ & $2.54$ & $3$ & evolution \\ 
    BANANAS\cite{white2021bananas} & $2.64$ & $-$ & $11.8$ & BO + predictor \\ 
    Random + cutout \cite{Liu2019_DARTS} & $3.29$ & $3.2$ & --- & --- \\
    Random Search WS \cite{Li2019_random} & $2.85$  & $4.3$ & $9.7$ & random search\\
    \midrule
    \our (8 cells) & $3.05$ & $1.6$ & $0.8^{\dagger}$ & MA\\
    \our & $2.63$ & $3.4$ & $2.8^{\dagger}$ & MA \\
    \our--LS  & $\bm{2.52}$ & $3.4$ & $4^{\dagger}$ & MA \\ 
    \midrule
    \our + AutoAugment & 1.97 & $3.4$ & --- & MA \\ 
    \our--LS  + AutoAugment & 1.85 & $3.4$ & --- & MA \\ 
    \bottomrule
    \multicolumn{5}{p{\dimexpr\textwidth-2\tabcolsep\relax}}{$^{\dagger}$ {\small Search cost is for $4$ runs and test error is for the best result (for a fair comparison with other methods).}}
  \end{tabular}
  }
\end{table}

\subsection{Results on CIFAR-10}

\paragraph{Evaluation.}
To evaluate our NAS algorithm, we follow DARTS's protocol: we run \our $4$ times with different random seeds and pick the best architecture based on its validation performance. We then randomly reinitialize the weights and retrain for $600$ epochs. During search we use half of the training set as validation. 
To fairly compare with more recent methods, we also re-train the best searched architecture using AutoAugment and Extended Training \cite{Cubuk_AutoAugment}.

\paragraph{Results.}
Both \our implementations perform well on this dataset (Table~\ref{table:Cifarresults}). 
Our algorithm is designed to perform comparably to \cite{Liu2019_DARTS} but with an order of magnitude less memory. However, \our actually achieves higher accuracy. The reason for this is that DARTS is forced to search for an $8$ cell architecture and subsequently stack the same cells $20$ times; \our, on the other hand, can directly search on the final number of cells leading to better results. We also report our results when using only $8$ cells: even though the network is much smaller, it still performs competitively with 1st-order $20$-cell DARTS. This is explored in more depth in Section~\ref{sec:alt_expers}. In terms of memory usage with a batch size of 1, MANAS $8$ cells required only 1GB of GPU memory, while DARTSv1 utilized more than 8.5GB and DARTSv2 required 9.6GB, making both versions of DARTS unpractical to work with datasets with larger image sizes.

\cite{Cai2018_proxylessNAS} is another method designed as an efficient alternative to DARTS; unfortunately the authors decided to a) use a different search space (PyramidNet backbone; \cite{Han2017_pyramidnet}) and b) offer no comparison to random sampling in the given search space. For these reasons we feel a numerical comparison to be unfair. Furthermore our algorithm uses half the GPU memory (they sample $2$ paths at a time) and does not require the reward to be differentiable.
Lastly, we observe similar gains when training the best \our/\our-LS architectures with an extended protocol (AutoAugment + 1500 Epochs + 50 Channels, in addition to the DARTS protocol).

\subsection{Results on ImageNet}

\begin{table}[t]
\caption{Comparison with state-of-the-art image classifiers on ImageNet (mobile setting). The four row blocks represent: human-designed, NAS, MANAS search with DARTS training protocol and best searched MANAS retrained with extended protocol (AutoAugment + 600 Epochs + 60 Channels).}
  \label{table:ImageNetresults}
\centering
  \resizebox{\textwidth}{!}{
\begin{tabular}{p{6cm}p{1.5cm}p{1cm}p{1.8cm}p{1.5cm}llllllll}
    \toprule
    Architecture   & Test Error (\%) & Params (M) & Search Cost (GPU days) & Search Method\\
    \midrule
    ShuffleNet 2x (v2)  \cite{Zhang2018_ShuffleNetV2} & $26.3$ & ~$5$ & --- & manual \\
    \midrule
    NASNet-A \cite{Zoph2018_NASNet} & $26.0$ & $5.3$ & $1800$ & RL \\
    AmoebatNet-C \cite{Real2018_AmoebaNet} & $24.3$ & $6.4$ & $3150$ & evolution \\
    PNAS \cite{Liu2018_PNAS} & $25.8$ & $5.1$ & $225$ & SMBO \\
    SNAS \cite{Xie19_SNAS} (search on C10) & $27.3$ & $4.3$ & $1.5$ & gradient \\
    DARTS \cite{Liu2019_DARTS} (search on C10) & $26.7$ & $4.7$ & $4$ & gradient      \\
    GDAS \cite{DBLP:conf/cvpr/DongY19} (search on C10) & $27.5$ & $4.4$ & $0.17$ & gradient \\
    EffPNet \cite{wang2021surrogate} (search on C10) & $27.1$ & $-$ & $3$ & evolution \\
    NASP \cite{DBLP:conf/aaai/YaoXTZ20} (search on C10) & $26.3$ & $9.5$ & $0.2$ & proximal \\

    \midrule
    Random sampling & $27.75$ & $2.5$ & --- & --- \\ 
    \our (search on C10) & $26.47$ & $2.6$ & $2.8$ & MA \\ 
    \our (search on IN) & $26.15$ & $2.6$ & $110$ & MA \\
    \midrule
    \our (search on C10) + AutoAugment & 26.81 & $2.6$ & --- & MA \\ 
    \our (search on IN) + AutoAugment & 25.26 & $2.6$ & --- & MA \\
    \bottomrule
  \end{tabular}
  }
\end{table}

\paragraph{Evaluation. }
To evaluate the results on ImageNet we train the final architecture for $250$ epochs. We report the result of the best architecture out of $4$, as chosen on the validation set for a fair comparison with competing methods.
As search and augmentation are very expensive we use only \our and not \our-LS, as the former is computationally cheaper and performs slightly better on average.

\paragraph{Results. }
We provide results for networks searched both on CIFAR-10 and directly on ImageNet, which is made possible by the computational efficiency of \our (Table~\ref{table:ImageNetresults}). 
When compared to SNAS, DARTS, GDAS and other methods, using the same search space, \our achieves state-of-the-art results both with architectures searched directly on ImageNet 
and also with architectures transferred from CIFAR-10. 
We observe similar improvements when training the best \our architecture with an extended training protocol (AutoAugment + 600 Epochs + 60 Channels, in addition to the DARTS protocol), resulting in a final test error of 25.26\% when directly searching on ImageNet.

\subsection{Results on new datasets: Sport-8, Caltech-101, MIT-67}
\label{sec:alt_expers}

\paragraph{Evaluation. }
The idea behind NAS is that of finding the optimal architecture, given \textit{any} sets of data and labels. Limiting the evaluation of current methods to CIFAR-10 and ImageNet could potentially lead to algorithmic overfitting. Indeed, recent results suggest that the search space was engineered in a way that makes it very hard to find a a bad architecture \cite{Li2019_random,Sciuto2019_random,Yang2020NASEFH,wan2022on}. To mitigate this, we propose testing NAS algorithms on $3$ datasets (composed of regular sized images)
that were never before used in this setting, but have been historically used in the CV field: 
Sport-8, Caltech-101 and MIT-67, described briefly in the Appendix. 
For these set of experiments we run the algorithm $8$ times and report mean and std. We perform this both for $8$ and $14$ cells; we do the same with DARTS (which, due to memory constraints can only search for $8$ cells).
As baselines, we consider random search and random sampling.
For the latter we simply sample uniformly $8$ architectures from the search space. To efficiently implement random search, we follow \cite{Li2019_random} and perform experiments on random search with WS.
Each proposed architecture is trained from scratch for $600$ epochs as in the previous section.

\paragraph{Results.} 
\our manages to outperform the random baselines and significantly outperform DARTS, especially on $14$ cells (Figure \ref{fig:ablation_1}): this clearly shows that the optimal cell architecture for $8$ cells is \textit{not} the optimal one for $14$ cells.

\begin{figure}
    \centering
    \includegraphics[width=0.49\textwidth]{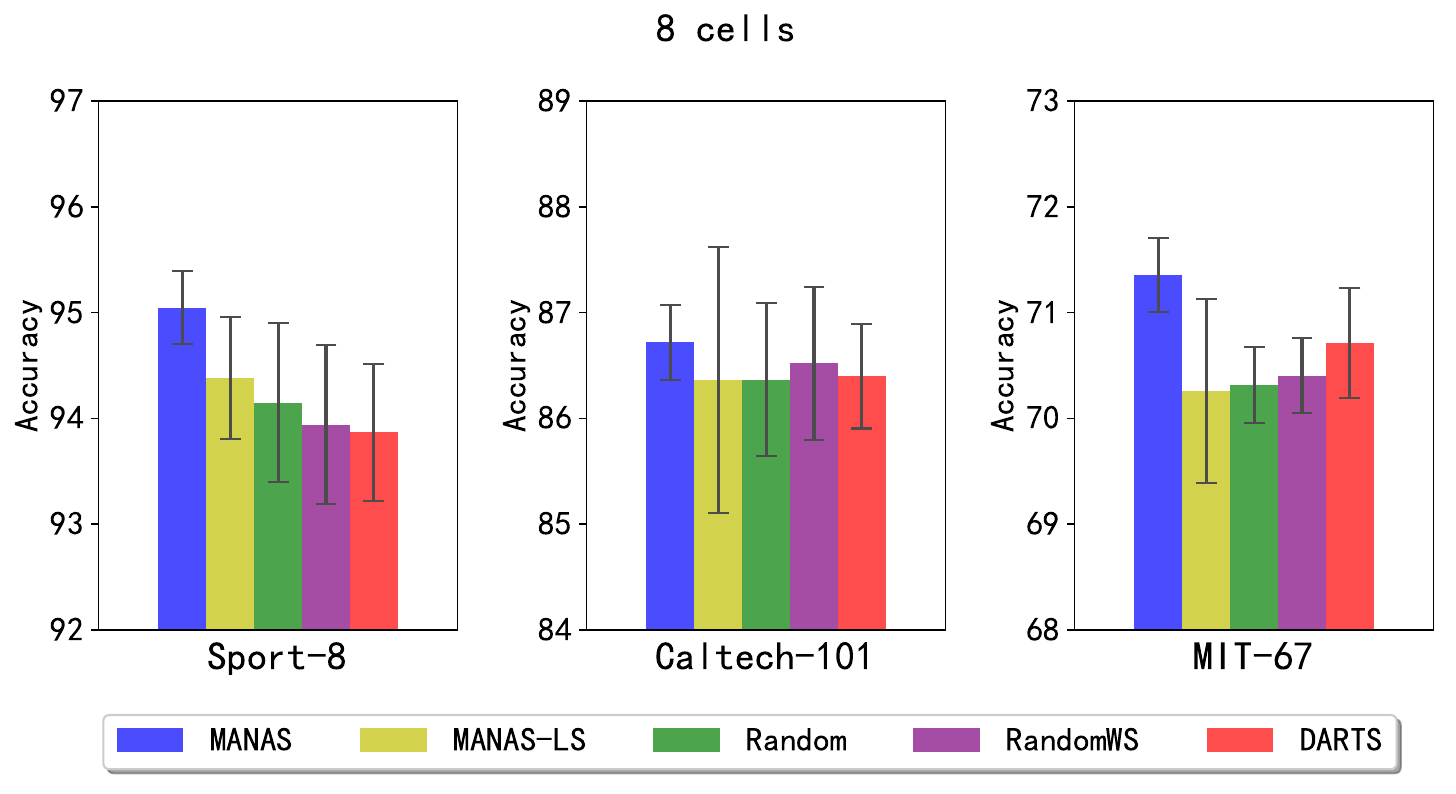}
    \includegraphics[width=0.49\textwidth]{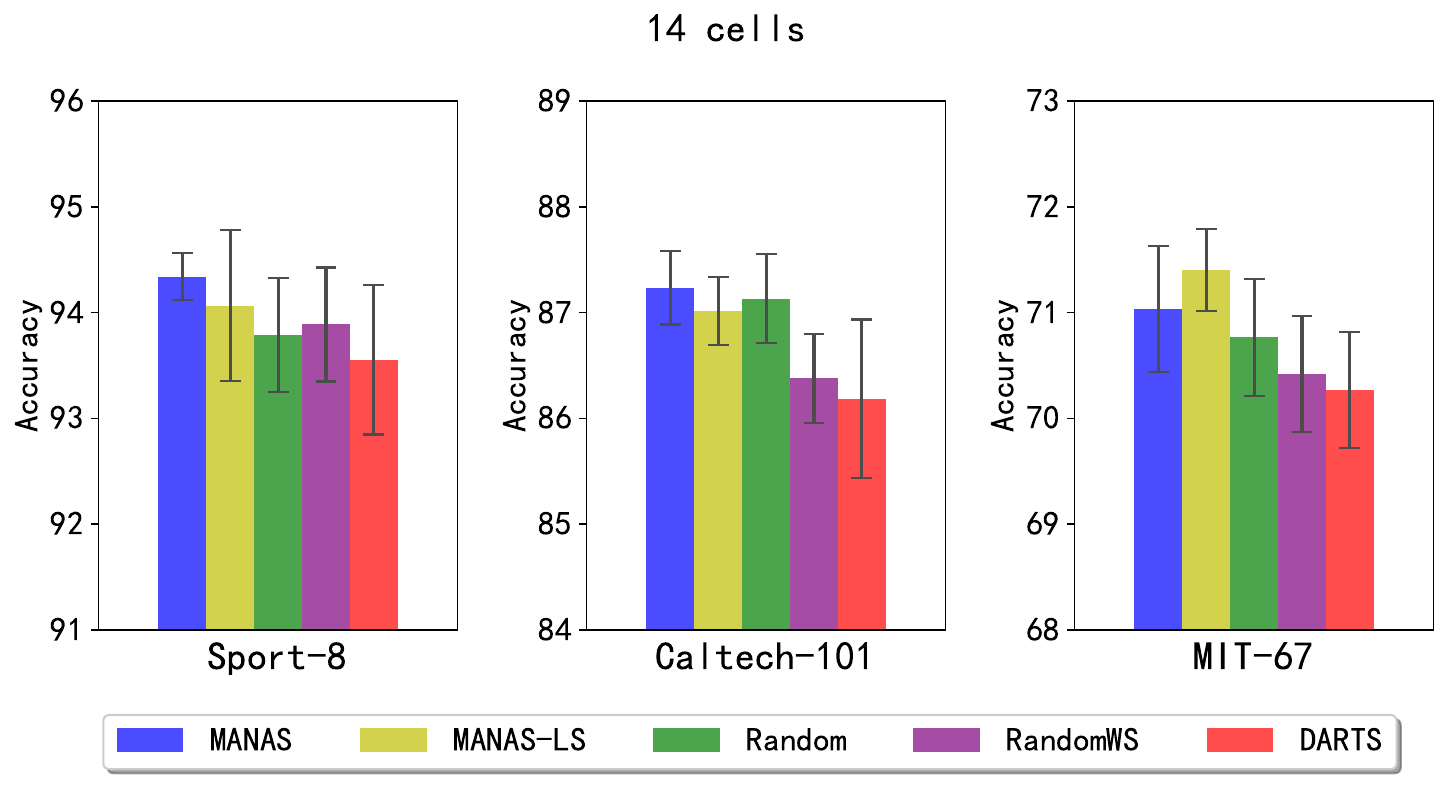}
    \vspace{2pt}
    \caption{Comparing \our, random sampling, random search with WS \cite{Li2019_random} and DARTS \cite{Liu2019_DARTS} on $8$ and $14$ cells. Average results of $8$ runs. Note that DARTS was only optimised for $8$ cells due to memory constraints.}
    \label{fig:ablation_1}
\end{figure}

\begin{table}[t]
    \caption{Results of \our with complexity constraints using different penalty ($\lambda$) values on CIFAR-10.\label{tab:constraints}}
    \centering
      \resizebox{0.55\textwidth}{!}{
      \begin{tabular}{lll} 
      \toprule
     $\lambda$ & Test Error (\%) & Inference Time (ms)                 \\ \midrule
     $0$ & $2.91$ & $1.255$ $\pm$ $0.012$ \\
     $0.1$ & $3.12$ & $1.196$ $\pm$ $0.009$ \\
     $0.25$ & $2.94$ & $1.190$ $\pm$ $0.008$ \\
     $0.5$ & $3.04$ & $1.183$ $\pm$ $0.006$ \\
     $0.75$ & $2.54$ & $1.179$ $\pm$ $0.005$ \\
     $1$ & $2.69$ & $1.164$ $\pm$ $0.006$ \\ \bottomrule               
    \end{tabular}
  }
\end{table}

\subsection{Results with Complexity Constraint}
\label{sec:complexity_constraint}

\paragraph{Evaluation. }
To evaluate the results of \our in a complexity constraint setting, we eadded the inference time of the generated architectures as a complexity constraint to the training. For this, we update the training loss with the constraint of the inference time that the generated architecture takes to classify an image: $\loss^{(\text{train})}_t(\bm{a}_t) =\loss^{(\text{train})}_t(\archi_t,\bm{w}_t )+\lambda L_t(\archi_t,\bm{w}_t )$, where $L_t$ is the inference time to classify one image, and $\lambda$ defines the importance given to $L_t$. 
Here, the $\lambda$ serves the purpose of varying the importance given to the inference time whilst searching. By increasing $\lambda$, the inference time constraint has a higher importance.

\paragraph{Results.} 
We evaluate \our with different $\lambda$ values using a single GPU (Table \ref{tab:constraints}), and observe that 
by increasing the importance given to the inference, \our consistently generates architectures with a lower inference time, with similar accuracies. This experiment shows that \our can be extended for searching with multiple objectives, by modifying the training loss.


\section{Discussion}
\label{sec:discussion}

\paragraph{Random Baselines.}
Clearly, in specific settings, random sampling performs very competitively. On one hand, since the search space is very large (between $8^{112}$ and $8^{280}$ architectures exist in the DARTS experiments), finding the global optimum is practically impossible. 
Why is it then that the randomly sampled architectures are able to deliver nearly state-of-the-art results?
Previous experiments \cite{Sciuto2019_random,Li2019_random} together with the results presented here seem to indicate that the available operations and meta-structure have been carefully chosen and, as a consequence, most architectures in this space generate very good results.
This suggests that human effort has simply transitioned from finding a good architecture to finding a good search space---a problem that needs careful consideration in future work. 
Random search with WS \cite{Li2019_random}, has also shown to perform competitively but it is clearly sub-optimal compared to our multi-agent framework.

\paragraph{On fair evaluation. }
It is worth stressing that we performed all comparisons using the same final training protocol. This is extremely relevant as there has been a recent trend to boost results simply by stacking more training tricks on to the evaluation protocol.
As such, any improvement in the final accuracy is solely due to how the network was trained rather than the quality of the search method used or the architecture discovered \cite{Yang2020NASEFH}. 

\paragraph{Agent coordination, combinatorial explosion and approximate credit assignment. }
Our set-up introduces multiple agents in need of coordination.
Centralised critics use explicit coordination and learn the value of coordinated actions across all agents~\cite{Shimon}, but the complexity of the problem grows exponentially with the number of possible architectures $\archi$, which equals $K^N$.
We argue instead for an implicit approach where coordination is achieved through a joint loss function depending on the actions of all agents. This approach is scalable as each agent searches its local action space---small and finite---for optimal action-selection rules.
Both credit assignment methods proposed learn, for each operation $k$ belonging to an agent $\mathcal{A}_{i}$, a quantity $\bm{ \widetilde B}^{i}_{t}[k] $ (similar to $\alpha$ in Section \ref{sec:nasSearch}) that quantifies the contribution of the operation to the observed losses.

\section{Conclusions}
We presented \our, a theoretically grounded multi-agent online learning framework for NAS. We proposed two extremely lightweight implementations that, within the same search space, outperform state-of-the-art while reducing memory consumption by an order of magnitude compared to \cite{Liu2019_DARTS}. We provide vanishing regret proofs for our algorithms. Furthermore, we evaluate \our on $3$ new datasets, empirically showing its effectiveness in a variety of settings.
Finally, we confirm concerns raised in recent works \cite{Sciuto2019_random,Li2019_random,Yang2020NASEFH} claiming that NAS algorithms often achieve minor gains over random architectures.
We however demonstrate, that MANAS still produces competitive results with limited computational budgets.

%
\section*{Funding}
{Financial support to the authors was received from ``FCT - Fundação para a Ciência e Tecnologia'', through the research grant ``2020.04588.BD'' [Vasco Lopes]; and from Huawei Technologies R\&D (UK) Ltd [all other authors].}

\section*{Conflicts of interest}
The authors declare that they have no conflict of interest.

\section*{Ethics approval}
Not required.

\section*{Consent to participate}
Not required.

\section*{Consent for publication}
Not required.

\section*{Availability of data}
All data used is publicly available.

\section*{Code availability}
Code will be publicly available.

\section*{Authors' contributions}
    \textbf{Conceptualization: } 
Vasco Lopes,
Fabio Maria Carlucci,
Pedro M Esperan\c{c}a,
Marco Singh,
Antoine Yang,
Jun Wang;
    \textbf{Methodology: } 
Vasco Lopes,
Fabio Maria Carlucci,
Pedro M Esperan\c{c}a,
Marco Singh,
Antoine Yang,
Victor Gabillon,
Hang Xu,
Zewei Chen; 
    \textbf{Formal analysis and investigation: }
Vasco Lopes,
Fabio Maria Carlucci,
Pedro M Esperan\c{c}a,
Marco Singh,
Antoine Yang,
Victor Gabillon;
    \textbf{Writing - original draft preparation: }
Fabio Maria Carlucci,
Pedro M Esperan\c{c}a,
Marco Singh,
Antoine Yang,
Victor Gabillon;
    \textbf{Writing - review and editing: }
Vasco Lopes,
Fabio Maria Carlucci,
Pedro M Esperan\c{c}a;
    \textbf{Supervision: }
Jun Wang.

\bibliographystyle{spmpsci}      
\bibliography{main}   

\appendix

\section{Datasets}

\paragraph{CIFAR-10. }
The CIFAR-10 dataset \cite{Krizhevsky2009_cifar} is a dataset with $10$ classes and consists of $50,000$ training images and $10,000$ test images of size $32{\times}32$. We use standard data pre-processing and augmentation techniques, i.e. subtracting the channel mean and dividing the channel standard deviation; centrally padding the training images to $40{\times}40$ and randomly cropping them back to $32{\times}32$; and randomly flipping them horizontally.

\paragraph{ImageNet. }
The ImageNet dataset \cite{Deng2009_imagenet} is a dataset with $1000$ classes and consists of $1,281,167$ training images and $50,000$ test images of different sizes. We use standard data pre-processing and augmentation techniques, i.e. subtracting the channel mean and dividing the channel standard deviation, cropping the training images to random size and aspect ratio, resizing them to $224{\times}224$, and randomly changing their brightness, contrast, and saturation, while resizing test images to $256{\times}256$ and cropping them at the center.

\paragraph{Sport-8. }
This is an action recognition dataset containing $8$ sport event categories and a total of $1579$ images \cite{li2007_sport8}. The tiny size of this dataset stresses the generalization capabilities of any NAS method applied to it.

\paragraph{Caltech-101. }
This dataset contains $101$ categories, each with $40$ to $800$ images of size roughly $300{\times}200$ \cite{fei2007_caltech101}.

\paragraph{MIT-67. }
This is a dataset of $67$ classes representing different indoor scenes and consists of $15,620$ images of different sizes \cite{quattoni2009_mit67}.

In experiments on Sport-8, Caltech-101 and MIT-67, we split each dataset into a training set containing $80\%$ of the data and a test set containing $20\%$ of the data. For each of them, we use the same data pre-processing techniques as for ImageNet.

\section{Implementation details}

\subsection{Methods}

\paragraph{\our. }
Our code is based on a modified variant of \cite{Liu2019_DARTS}. To set the temperature and gamma, we used as starting estimates the values suggested by \cite{bubeck2012regret}: $t=\frac{1}{\eta}$ with $\eta=0.95\frac{\sqrt{\ln(K)}}{nK}$ ($K$ number of actions, $n$ number of architectures seen in the whole training); and $\gamma = 1.05 \frac{K\ln(K)}{n}$. We then tuned them to increase validation accuracy during the search.

\paragraph{\our-LS. }
For our Least-Squares solution, we alternate between one epoch of training (in which all $\beta$ are frozen and the $\omega$ are updated) and one or more epochs in which we build the $Z$ matrix from Section 4 (in which both $\beta$ and $\omega$ are frozen). The exact number of iterations we perform in this latter step is dependant on the size of both the dataset and the searched architecture: our goal is simply to have a number of rows greater than the number of columns for $Z$. We then solve $\bm{ \widetilde B}_{t} = \left(\bm{Z}\bm{Z}^{\mathsf{T}}\right)^{\dagger}\bm{Z}\bm{L},$
and repeat the whole procedure until the end of training. This method requires no additional meta-parameters.

\paragraph{Number of agents. }
In both MANAS variants, the number of agents is defined by the search space and thus is not tuned. Specifically, for the image datasets, there exists one agent for each pair of nodes, tasked with selecting the optimal operation. As there are $14$ pairs in each cell, the total number of agents is $14 \times C$, with $C$ being the number of cells ($8, 14$ or $20$, depending on the experiment).

\subsection{Computational resources}

ImageNet experiments  were performed on multi-GPU machines loaded with $8\times$ Nvidia Tesla V100 16GB GPUs (used in parallel).
All other experiments were performed on single-GPU machines loaded with $1\times$ GeForce GTX 1080 8GB GPU.

\section{Factorizing the Regret}
\label{app:facReg}

\paragraph{Factorizing the Regret:}                      
Let us firstly formulate the multi-agent combinatorial online learning in a more formal way. Recall, at each round, agent $\mathcal{A}_i$ samples an action from a fixed discrete collection $\{\bm{a}^{(\mathcal{A}_i)}_{j}\}^{K}_{j=1}$. Therefore, after each agent makes a choice of its action at round $t$, the resulting network architecture $\mathcal{Z}_t$ is described by joint action profile $\boldsymbol{\vec{a}}_t = \left[\bm{a}^{(\mathcal{A}_1),[t]}_{j_1}, \ldots, \bm{a}^{(\mathcal{A}_N),[t]}_{j_{N}} \right]$ and thus, we will use $\mathcal{Z}_t$ and $\boldsymbol{\vec{a}}_t$ interchangeably.
Due to the discrete nature of the joint action space, the validation loss vector at round $t$ is given by $\boldsymbol{\vec{\mathcal{L}}}^{(\text{val})}_{t} = \left(\mathcal{L}^{(\text{val})}_{t}\left(\mathcal{Z}^{(1)}_t\right), \ldots, \mathcal{L}^{(\text{val})}_{t}\left(\mathcal{Z}^{(K^N)}_t\right)\right)$ and for the environment one can write $\nu  = \left(\boldsymbol{\vec{\mathcal{L}}}^{(\text{val})}_{1}, \ldots, \boldsymbol{\vec{\mathcal{L}}}^{(\text{val})}_{T}\right)$. The interconnection between joint policy $\bm{\pi}$ and an environment $\nu$ works in a sequential manner as follows: at round $t$, the architecture $\mathcal{Z}_t\sim\boldsymbol{\pi}_t(\cdot|\mathcal{Z}_1,\mathcal{L}^{(\text{val})}_{1},\ldots, \mathcal{Z}_{t-1}, \mathcal{L}^{(\text{val})}_{t-1})$ is sampled and validation loss $\mathcal{L}^{(\text{val})}_{t} =  \mathcal{L}^{(\text{val})}_{t}(\mathcal{Z}_t)$ is observed\footnote{Please notice, the observed reward is actually a random variable}. 
As we mentioned previously, assuming linear contribution of each individual actions to the validating loss, one goal is to find a policy $\boldsymbol{\pi}$ that keeps the regret:
\begin{equation}\label{regret_def}
    \mathcal{R}_{T}(\bm{\pi}, \nu) = \mathbb{E}\left[\sum_{t=1}^{T} \boldsymbol{\beta}^{\mathsf{T}}_{t}\boldsymbol{Z}_t - \min_{\boldsymbol{Z}\in \mathcal{F}}\left[\sum_{t=1}^{T}\boldsymbol{\beta}^{\mathsf{T}}_{t}\boldsymbol{Z}\right]\right]
\end{equation}
small with respect to all possible forms of environment $\nu$. We reason here with the cumulative regret the reasoning applies as well to the simple regret. Here, $\boldsymbol{\beta}_t\in\mathbb{R}^{KN}_{+}$ is a contribution vector of all actions and $\boldsymbol{Z}_t$ is binary representation of architecture $\mathcal{Z}_t$ and $\mathcal{F}\subset [0,1]^{KN}$ is set of all feasible architectures\footnote{We assume that architecture is feasible if and only if each agent chooses exactly one action.}. In other words, the quality of the policy is defined with respect to worst-case regret:
\begin{equation}\label{worst_case_regret}
    \mathcal{R}^*_{T} = \sup_{\nu}\mathcal{R}_{T}(\bm{\pi}, \nu)
\end{equation}
Notice, that linear decomposition of the validation loss allows to rewrite the total regret (\ref{regret_def}) as a sum of agent-specific regret expressions $\mathcal{R}^{(\mathcal{A}_i)}_T\left(\boldsymbol{\pi}^{(\mathcal{A}_i)},\nu^{(\mathcal{A}_i)}\right)$ for $i=1,\ldots, N$:
\begin{align*}
    \mathcal{R}_{T}(\bm{\pi}, \nu) &= \mathbb{E}\left[\sum_{t=1}^{T}\left(\sum_{i=1}^{N} \boldsymbol{\beta}^{(\mathcal{A}_i),\mathsf{T}}_{t}\boldsymbol{Z}^{(\mathcal{A}_i)}_t - \sum_{i=1}^N\min_{\boldsymbol{Z}^{(\mathcal{A}_{i})}\in  \mathcal{B}^{(K)}_{||\cdot||_0, 1}(\bm{0})}\left[\sum_{t=1}^{T}\boldsymbol{\beta}^{(\mathcal{A}_i),\mathsf{T}}_{t}\boldsymbol{Z}^{(\mathcal{A}_i)}\right]\right)\right] \\\nonumber
    &=\sum_{i=1}^N\mathbb{E}\left[\sum_{t=1}^T\boldsymbol{\beta}^{(\mathcal{A}_i),\mathsf{T}}_{t}\boldsymbol{Z}^{(\mathcal{A}_i)}_t - \min_{\boldsymbol{Z}^{(\mathcal{A}_{i})}\in  \mathcal{B}^{(K)}_{||\cdot||_0, 1}(\bm{0})}\left[\sum_{t=1}^{T}\boldsymbol{\beta}^{(\mathcal{A}_i),\mathsf{T}}_{t}\boldsymbol{Z}^{(\mathcal{A}_i)}\right]  \right] \\ 
    &= \sum_{i=1}^N\mathcal{R}^{(\mathcal{A}_i)}_T\left(\boldsymbol{\pi}^{(\mathcal{A}_i)},\nu^{(\mathcal{A}_i)}\right)
\end{align*}
where $\boldsymbol{\beta}_t = \left[\bm{\beta}^{\mathcal{A}_1,\mathsf{T}}_{t},\ldots, \bm{\beta}^{\mathcal{A}_N,\mathsf{T}}_{t}\right]^{\mathsf{T}}$ and $\boldsymbol{Z}_{t} = \left[\boldsymbol{Z}^{(\mathcal{A}_1),\mathsf{T}}_t,\ldots, \boldsymbol{Z}^{(\mathcal{A}_N),\mathsf{T}}_t\right]^{\mathsf{T}}$, $\boldsymbol{Z} = \left[\boldsymbol{Z}^{(\mathcal{A}_1),\mathsf{T}},\ldots, \boldsymbol{Z}^{(\mathcal{A}_N),\mathsf{T}}\right]^{\mathsf{T}}$ are decomposition of the corresponding vectors on agent-specific parts,  joint policy $\boldsymbol{\pi}(\cdot) = \prod_{i=1}^{N}\boldsymbol{\pi}^{(\mathcal{A}_i)}(\cdot)$, and joint environment $\nu = \prod_{i=1}^N\nu^{(\mathcal{A}_i)}$, and   $\mathcal{B}^{(K)}_{||\cdot||_0, 1}(\bm{0})$ is unit ball with respect to $||\cdot||_0$ norm centered at $\boldsymbol{0}$ in $[0,1]^K$. Moreover, the worst-case regret (\ref{worst_case_regret}) also can be decomposed into agent-specific form:
\begin{align*}
    &\mathcal{R}^\star_{T} = \sup_{\nu}\mathcal{R}_{T}(\bm{\pi}, \nu) \ \ \ \iff  \ \ \ \sup_{\nu^{(\mathcal{A}_i)}}\mathcal{R}^{(\mathcal{A}_i)}_T\left(\boldsymbol{\pi}^{(\mathcal{A}_i)},\nu^{(\mathcal{A}_i)}\right), \ \ \ i=1,\ldots,N. 
\end{align*}
This decomposition allows us to significantly reduce the search space and apply the two following algorithms for each agent $\mathcal{A}_i$ in a completely parallel fashion.

\section{Theoretical Guarantees}
\label{app:theory}

\subsection{MANAS-LS}
\label{app:theoryLZ}

First, we need to be more specific on the way to obtain the estimates
$\boldsymbol{\tilde\beta}^{(\mathcal{A}_i)}_t[k] $.

In order to obtain theoretical guaranties we  considered the least-square estimates as in~\cite{cesa2012combinatorial} as 

\begin{equation}\label{eq:eqeq}
\bm{\tilde \beta}_t=\loss^{(\text{val})}_t \bm{P}^{\dagger}\bm{Z}_t
\text{ where } \bm{P} = \mathbb{E}   \left[ \bm{Z} \bm{Z}^T \right]
\text{ with  } \bm{Z} \text{ has law }   \boldsymbol{\pi}_t(\cdot) = \prod_{i=1}^{N}\boldsymbol{\pi}_t^{(\mathcal{A}_i)}(\cdot)
\end{equation}

Our analysis is under the assumption that each  $\bm{\beta}_t\in \Real^{KN}$ belongs to the linear space spanned by the space of sparse architecture $\bm{\archi}$. This is not a strong assumption as the only condition on a sparse architecture comes with the sole restriction that one operation  for each agent is active.

\begin{theorem}
Let us consider neural architecture search problem in a multi-agent combinatorial online learning form with $N$ agents such that each agent has $K$ actions. Then after $T$ rounds, MANAS-LS achieves joint policy $\{\bm{\pi}_t\}^{T}_{t=1}$ with expected simple  regret (Equation~\ref{cregret_def}) bounded by $\mathcal{O}\left(e^{-T/H}\right)$ in any adversarial environment with complexity bounded by $H=N(\min_{j\neq k^\star_i,i\in\{1,\ldots,N\} }\boldsymbol{B}^{(\mathcal{A}_i)}_T[j]-\boldsymbol{B}^{(\mathcal{A}_i)}_T[k^\star_i] )$, where $k^\star_i= \min_{j\in\{1,\ldots,K\} }\boldsymbol{B}^{(\mathcal{A}_i)}_T[j]$.
\end{theorem}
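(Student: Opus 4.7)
The plan is to reduce the joint simple regret to a sum of per-agent simple regrets (via the factorization already established in Appendix~\ref{app:facReg}) and then analyze each agent separately as a best-arm-identification problem whose error probability decays exponentially in $T/H$. Concretely, by the same calculation that factorizes $\mathcal{R}_T$, the simple regret $r_{T,\pi}^\star$ decomposes as $\sum_{i=1}^N r_{T,\pi}^{\star,(\mathcal{A}_i)}$, where each term equals $\mathbb{E}[\boldsymbol{B}^{(\mathcal{A}_i)}_T[\bm{a}_{T+1}^{(\mathcal{A}_i)}] - \boldsymbol{B}^{(\mathcal{A}_i)}_T[k_i^\star]]$ with $\bm{a}_{T+1}^{(\mathcal{A}_i)} \sim \pi_{T+1}^{(\mathcal{A}_i)}$. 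It therefore suffices to show that, for each agent, the Zipf recommendation policy places mass $1-\mathcal{O}(e^{-T/H_i})$ on $k_i^\star$, where $H_i$ is the per-agent complexity, and then sum.

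Next, I would establish that the least-squares estimator in~(\ref{eq:eqeq}) is unbiased for $\bm{\beta}_t$ under the stated spanning assumption: since $\boldsymbol{P}_t=\mathbb{E}[\boldsymbol{Z}\boldsymbol{Z}^{\!\top}]$ with $\boldsymbol{Z}\sim \boldsymbol{\pi}_t$ and $\mathcal{L}_t=\bm{\beta}_t^{\!\top}\boldsymbol{Z}_t$, we have $\mathbb{E}[\tilde{\bm{\beta}}_t\mid \text{past}]=\boldsymbol{P}_t^{\dagger}\boldsymbol{P}_t\bm{\beta}_t=\bm{\beta}_t$ whenever $\bm{\beta}_t$ lies in the range of $\boldsymbol{P}_t$, which the paper imposes. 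The key quantitative ingredient is that under the Zipf distribution every operation of every agent is sampled with probability at least $1/(K\,\overline{\log}K)$; this controls $\|\boldsymbol{P}_t^{\dagger}\|$ from above and hence bounds $|\tilde{\bm{\beta}}_t[k]|$ uniformly by a quantity polynomial in $K\log K$. The cumulative estimator $\tilde{\boldsymbol{B}}^{(\mathcal{A}_i)}_T[k]=\sum_{t=1}^T \tilde{\bm{\beta}}_t^{(\mathcal{A}_i)}[k]$ is then a sum of bounded martingale differences.

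To obtain the exponential rate, I would apply Azuma--Hoeffding (or Bernstein for a tighter constant) to the martingale $\{\tilde{\boldsymbol{B}}^{(\mathcal{A}_i)}_t[k]-\boldsymbol{B}^{(\mathcal{A}_i)}_t[k]\}_t$ and take a union bound over the $K$ arms of agent $\mathcal{A}_i$. Writing the per-agent gap as $\Delta_i=\min_{j\neq k_i^\star}\bigl(\boldsymbol{B}^{(\mathcal{A}_i)}_T[j]-\boldsymbol{B}^{(\mathcal{A}_i)}_T[k_i^\star]\bigr)$, the event that $\widetilde{\langle k_i^\star\rangle}_T^{(\mathcal{A}_i)}=1$ holds with probability $1-\mathcal{O}(K\exp(-c\,\Delta_i^2/T))$ for some absolute constant $c$; on this event the Zipf law~(\ref{lst:line:sample}) recommends $k_i^\star$ with probability $1-\mathcal{O}(1/\overline{\log}K)$, and off this event the loss incurred is at most $\Delta_i$ times a constant. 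Choosing the Hoeffding bound with the definition $H=N\min_i\Delta_i$ and summing the $N$ agent contributions yields $r^\star_T=\mathcal{O}(e^{-T/H})$, as claimed.

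\textbf{Main obstacle.} The delicate step is the martingale concentration: because $\boldsymbol{P}_t$ depends on the history through $\bm{\pi}_t$, the increments $\tilde{\bm{\beta}}_t$ are neither independent nor identically distributed, and their conditional ranges scale with $\|\boldsymbol{P}_t^{\dagger}\|$. One must therefore argue that Zipf sampling keeps the minimum selection probability uniformly bounded below for \emph{every} $t$, so that the martingale differences are uniformly bounded, and then invoke Azuma--Hoeffding. A secondary subtlety is justifying the spanning assumption used for unbiasedness---one should verify that the set of feasible $\bm{Z}$ spans the subspace in which $\bm{\beta}_t$ lives, which follows from the structural constraint that each agent selects exactly one operation, but requires a small linear-algebra check.
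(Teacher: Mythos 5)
Your proposal follows essentially the same route as the paper's proof: reduce the joint simple regret to per-agent misidentification probabilities via the linear factorization, use the unbiased least-squares estimator $\tilde{\bm{\beta}}_t=\mathcal{L}_t^{(\text{val})}\bm{P}^{\dagger}\bm{Z}_t$, control the conditional range of the increments through the uniform-like lower bound that Zipf sampling places on every operation's selection probability, and conclude with Azuma's inequality for martingale differences plus a union bound over the $K$ arms and the $N$ agents. Your "main obstacle" paragraph correctly isolates the same technical point the paper leans on (history-dependent $\bm{P}_t$, hence the need for a uniform lower bound on selection probabilities to bound the martingale ranges, which the paper handles by noting the Zipf policy is uniform up to a $1/\overline{\log}K$ factor and invoking the ComBand range bound). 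One slip: under the Zipf law the top-ranked operation receives probability $1/\overline{\log}K$, not $1-\mathcal{O}(1/\overline{\log}K)$ as you claim, so a literal Zipf sample at round $T+1$ would not recommend $k_i^\star$ with high probability even when the ranking is correct; the paper's proof implicitly treats the recommendation as the empirical argmax (it bounds $P(k_i^\star\neq a_{T+1}^{\mathcal{A}_i})$ by the probability that some suboptimal arm has a smaller estimated cumulative loss), and your argument should do the same.
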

\begin{proof}
In Equation~\ref{eq:eqeq} we use the same constructions of  estimates $\bm{\tilde \beta}_t$ as in ComBand.
Using Corollary~14 in~\cite{cesa2012combinatorial} we then have that 
$\bm{\widetilde B}_t$ is an unbiased estimates of~$ \bm{B}_t$.

	Given the adversary losses, 
	the random variables $\estGainVector_{t}$ can 
	be dependent of each other 
	and $t\in[\timeHorizon]$ as $\pi_{t}$ 
	depends on previous observations at previous rounds. 
	Therefore, we use the Azuma inequality for martingale  differences by~\cite{Freedman75OT}.

Without loss of generality we assume that the loss $\loss^{(\text{val})}_t$ are bounded such that $\loss^{(\text{val})}_t\in [0,1]$ for all~$t$. Therefore we can bound the simple regret of each agent by the probability of misidentifying of the best operation $P(k^\star_i\neq a^{\mathcal{A}_i}_{T+1})$.


We consider a fixed adversary of complexity bounded by $H$.
For simplicity, and without loss of generality, we order the
operations from such that $\boldsymbol{B}^{(\mathcal{A}_i)}_T[1]<\boldsymbol{B}^{(\mathcal{A}_i)}_T[2]\leq \ldots \leq \boldsymbol{B}^{(\mathcal{A}_i)}_T[K]$ for all agents.

We denote  for $k>1$,  $\gap_{k} = \boldsymbol{B}^{(\mathcal{A}_i)}_T[k]-\boldsymbol{B}^{(\mathcal{A}_i)}_T[k^\star_i]  $
and 
$\gap_{1} =\gap_{2}  $.

We also have $\lambda_{min}$ as  the smallest nonzero eigenvalue of $\bm{M}$ where $\bm{M}$ is $\bm{M}=E[\bm{Z} \bm{Z}^T]$ where $\bm{Z}$ is a random vector representing a sparse architecture distributed according to the uniform distribution.

	\begin{align*}
P(k^\star_i\neq a^{\mathcal{A}_i}_{T+1}) &	=    
 	\Pro\left(\exists k\in\setArmsmo :
 	\estCumulGainVector^{(\mathcal{A}_i)}_T[1]
 	\geq\estCumulGainVector^{(\mathcal{A}_i)}_T[k]
 	\right)\\
 	&    \leq
 	\Pro\left(\exists k\in\setArmsmo :\cumulGain^{(\mathcal{A}_i)}_T[k]-\estCumulGainVector^{(\mathcal{A}_i)}_T[k] 
 	\geq\frac{\timeHorizon\gap_{k}}{2} 
 	\text{\ \ or\ \ }
 	\estCumulGainVector^{(\mathcal{A}_i)}_T[1] -\cumulGain^{(\mathcal{A}_i)}_T[1]
 	\geq\frac{\timeHorizon\gap_{1}}{2} 
 	\right)    \\
 	&    \leq  
 	\Pro\left(\estCumulGainVector^{(\mathcal{A}_i)}_{\timeHorizon}[1] -\cumulGain_T^{(\mathcal{A}_i)}[1]
 	\geq\frac{\timeHorizon\gap_{1}}{2} 
 	\right)    +
 	\sum_{k=2}^{\nArms} 
 	\Pro\left(\cumulGain_T^{(\mathcal{A}_i)}[k]-\estCumulGainVector^{(\mathcal{A}_i)}_{\timeHorizon}[k] 
 	\geq\frac{\timeHorizon\gap_{k}}{2} 
 	\right)    \\
 	&    \stackrel{\textbf{(a)}}{\leq} \sum_{k=1}^{\nArms} 
 	\exp\left(-\frac{(\gap_{k})^2\timeHorizon}
 	{2Nlog(K)/\lambda_{min}}\right)\\
 	&\leq \nArms  \exp\left(-\frac{(\gap_{1})^2\timeHorizon}
 	{2Nlog(K)/\lambda_{min}}\right),
	\end{align*}
	where \textbf{(a)} is using Azuma's inequality  for martingales
	applied to the sum of the random variables with mean zero 
	that are $\estGainVector_{k,t}-\gainVector_{k,t}$ for 
	which we have the following  bounds on  the range.
	The  range  of $ \estGainVector_{k,t}$ is
	$[0, Nlog(K)/\lambda_{min}]$. Indeed our sampling policy is uniform with probability $1/log(K)$ therefore one can bound $ \estGainVector_{k,t}$ as in~\cite[Theorem 1]{cesa2012combinatorial} 
	Therefore we have 
	$|\estGainVector_{k,t}-\gainVector_{k,t} |
	\leq Nlog(K)/\lambda_{min}$.
	%
	%

	We recover the result with a union bound on all agents.
\end{proof}

\subsection{MANAS}
\label{app:theoryCS}

We consider a simplified notion of regret that is a  regret per agent where each agent is considering the rest of the agents as part of the adversarial environment. 
Let us fix our new objective as to minimise
\begin{align*}
\sum_{i=1}^N\mathcal{R}^{\star,i}_{T}(\pi^{(\mathcal{A}_i)}) &= \sum_{i=1}^N \sup_{\bm{a}_{-i},\nu}  \mathbb{E}\left[\sum_{t=1}^{T} \mathcal{L}^{(\text{val})}_{t}(\bm{a}^{(\mathcal{A}_i)}_{t},\bm{a}_{-i}) - \min_{\bm{a}\in \{1,\ldots,K\}}\left[\sum_{t=1}^{T}\mathcal{L}^{(\text{val})}_{t}(\bm{a},\bm{a}_{-i})\right]\right],
\end{align*}
where $\bm{a}_{-i}$ is a fixed set of actions played by all agents to the exception of agent $\mathcal{A}_i$ for the $T$ rounds of the game and $\nu$ contains all the losses as  $\nu =\{\mathcal{L}^{(\text{val})}_{t}(\bm{a})\}_{t\in\{1,\ldots,T\},\bm{a}\in\{1,\ldots,K^N\} }$.

We then can prove the following bound for that new notion of regret.
\begin{theorem}
Let us consider neural architecture search problem in a multi-agent combinatorial online learning form with $N$ agents such that each agent has $K$ actions. Then after $T$ rounds, MANAS  achieves joint policy $\{\bm{\pi}_t\}^{T}_{t=1}$ with expected cumulative regret  bounded by $\mathcal{O}\left(N\sqrt{TK\log K}\right)$.
\end{theorem}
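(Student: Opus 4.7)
The strategy is to decouple the $N$ agents and reduce each agent's problem to a standard adversarial $K$-armed bandit, then invoke the classical EXP3 regret bound and sum across agents. Fix agent $\mathcal{A}_i$ and condition on an arbitrary (possibly adversarial) sequence of joint actions $\bm{a}_{-i}$ of the remaining $N-1$ agents over the $T$ rounds. From $\mathcal{A}_i$'s viewpoint, the validation loss $\mathcal{L}^{(\text{val})}_t(\bm{a}^{(\mathcal{A}_i)}_t,\bm{a}_{-i})$ depends only on its own action and on an exogenous (adversarial) context; hence we have a standard $K$-armed adversarial bandit problem with loss vectors $\ell_t^{(i)}[k] := \mathcal{L}^{(\text{val})}_t(k,\bm{a}_{-i,t})$.

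Next I would verify that the MANAS update on agent $\mathcal{A}_i$ coincides with EXP3. The credit assignment in equation~\ref{eq:MANAS_credit} is exactly the importance-weighted estimator $\tilde{\bm{B}}^{i}_t[k] = \tilde{\bm{B}}^{i}_{t-1}[k] + \ell_t^{(i)}[k]\,\mathbbm{1}\{\bm{a}^{i}_t = k\}/\boldsymbol{\pi}^{i}_t[k]$, which is unbiased conditioned on the history, i.e.\ $\mathbb{E}_{t-1}[\ell_t^{(i)}[k]\,\mathbbm{1}\{\bm{a}^{i}_t = k\}/\boldsymbol{\pi}^{i}_t[k]] = \ell_t^{(i)}[k]$. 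Combined with the softmax sampling rule, this is precisely the EXP3 algorithm of~\cite{auer2002nonstochastic} applied to agent $\mathcal{A}_i$'s $K$-armed bandit.

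Invoking the standard EXP3 analysis (Theorem~3.1 of~\cite{auer2002nonstochastic}) with the learning rate $\eta = \sqrt{(\log K)/(TK)}$, we obtain
\[
\mathcal{R}^{\star,i}_T(\pi^{(\mathcal{A}_i)}) = \sup_{\bm{a}_{-i},\nu}\mathbb{E}\!\left[\sum_{t=1}^T \ell_t^{(i)}[\bm{a}_t^{i}] - \min_{k}\sum_{t=1}^T \ell_t^{(i)}[k]\right] \leq 2\sqrt{TK\log K}.
\]
The supremum over $\bm{a}_{-i}$ is absorbed into the adversarial environment and does not affect the EXP3 bound, because EXP3 is robust to arbitrary (even adaptive) loss sequences. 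Summing over the $N$ independent per-agent regret bounds yields $\sum_{i=1}^{N}\mathcal{R}^{\star,i}_T(\pi^{(\mathcal{A}_i)}) = \mathcal{O}(N\sqrt{TK\log K})$, as claimed.

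\textbf{Main obstacle.} The delicate point is justifying the per-agent reduction: MANAS updates all agents concurrently from the same global loss, so in principle the loss agent $\mathcal{A}_i$ observes depends on the stochastic choices $\{\bm{a}^{j}_t\}_{j\neq i}$ of the other agents, not on a pre-fixed $\bm{a}_{-i}$. Taking a supremum over $\bm{a}_{-i}$ in the definition of $\mathcal{R}^{\star,i}_T$ circumvents this by treating the other agents as an unrestricted (worst-case) oblivious adversary for $\mathcal{A}_i$; this is exactly what makes the per-agent regret notion weaker than a joint-best-response regret, but it is sufficient for the stated bound. Checking that the EXP3 martingale arguments go through with this conditioning (i.e.\ that $\tilde{\bm{B}}^i_t$ remains an unbiased estimate conditional on $\mathcal{A}_i$'s own history) is the only technical subtlety; it follows because $\boldsymbol{\pi}_t^i$ depends only on $\mathcal{A}_i$'s past observations, not on the realization of other agents' actions at round $t$.
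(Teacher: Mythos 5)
Your proposal is correct and follows essentially the same route as the paper: fix the other agents' actions $\bm{a}_{-i}$, observe that the resulting game for agent $\mathcal{A}_i$ is a vanilla adversarial $K$-armed bandit on which the MANAS update is exactly EXP3, apply the $2\sqrt{TK\log K}$ bound, take the supremum over $\bm{a}_{-i}$, and sum over the $N$ agents. Your added remarks on the unbiasedness of the importance-weighted estimator and on the gap between this per-agent regret notion and a true joint regret are consistent with (and slightly more explicit than) the paper's own caveat that each agent's regret is computed ``by considering the rest of the agents as fixed.''
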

\begin{proof}

First we look at the problem for each given agent $\mathcal{A}_i$  and we define and look at
\[
\mathcal{R}^{\star,i}_{T}(\pi^{(\mathcal{A}_i)}, \bm{a}_{-i}) = \sup_{\nu}  \mathbb{E}\left[\sum_{t=1}^{T} \mathcal{L}^{(\text{val})}_{t}(\bm{a}^{(\mathcal{A}_i)}_{t},\bm{a}_{-i}) - \min_{\bm{a}\in \{1,\ldots,K\}}\left[\sum_{t=1}^{T}\mathcal{L}^{(\text{val})}_{t}(\bm{a},\bm{a}_{-i})\right]\right],
\]
We want to relate that the game that agent $i$ plays against an adversary when the actions of all the other agents are fixed to $\bm{a}_{-i}$ to the vanilla EXP3 setting.
To be more precise on why this is the EXP3 setting,
first we have that $\mathcal{L}^{(\text{val})}_{t}(\bm{a}_t)$ is a function of $\bm{a}_t$ that can take $K^N$ arbitrary values.
When we fix $\bm{a}_{-i}$, $\mathcal{L}^{(\text{val})}_{t}(\bm{a}^{(\mathcal{A}_i)}_{t},\bm{a}_{-i})$ is a function of $\bm{a}^{(\mathcal{A}_i)}_{t}$ that can only take $K$ arbitrary values.

One can 
redefine $\mathcal{L}^{@,(\text{val})}_{t}(\bm{a}^{(\mathcal{A}_i)}_{t})=\mathcal{L}^{(\text{val})}_{t}(\bm{a}^{(\mathcal{A}_i)}_{t},\bm{a}_{-i})$
and then the game boils down to the vanilla adversarial multi-arm bandit where each time the learner plays $\bm{a}^{(\mathcal{A}_i)}_{t}\in \{1,\ldots,K\}$ and observes/incur the loss $\mathcal{L}^{@,(\text{val})}_{t}(\bm{a}^{(\mathcal{A}_i)}_{t})$.
Said differently this defines a game where the new $\nu'$ contains all the losses as  $\nu' =\{\mathcal{L}^{@,(\text{val})}_{t}(\bm{a}^{(\mathcal{A}_i)})\}_{t\in\{1,\ldots,T\},\bm{a}^{(\mathcal{A}_i)}\in\{1,\ldots,K\} }$.

For all  $\bm{a}_{-i}$
\begin{align*}
\mathcal{R}^{\star,i}_{T}(EXP3, \bm{a}_{-i}) &    \leq2\sqrt{TK\log (K)}
\end{align*}

Then we have \begin{align*}
\mathcal{R}^{\star,i}_{T}(EXP3) &    \leq \sup_{\bm{a}_{-i}} 2\sqrt{TK\log (K)}\\
&   = 2\sqrt{TK\log (K)}
\end{align*}

Then we have
\[
\sum_{i=1}^N \mathcal{R}^{\star,i}_{T}(EXP3) \leq 2N\sqrt{TK\log (K)}
\]

\end{proof}

\section{Relation between weight sharing and cumulative regret}
\label{sec:discudiscu}
Ideally we would like to obtain for any given architecture $\archi$ the value  $\loss_{val}(\archi,\opWeights^\star(\archi))$. However obtaining $\opWeights^\star(\archi) = \arg\min_{\opWeights} \loss_{train}(\opWeights, \archi)$ for any given fixed $\archi$ would already  require heavy computations. In our approach the $\opWeights_t$ that we compute and update is actually common to all $\archi_t$ as $\opWeights_t$ replaces $\opWeights^\star(\archi_t)$. This is a simplification that leads to learning a weight $\opWeights_t$ that tend to minimise the loss $\mathbb{E}_{\archi\sim \pi_t}[\mathcal{L}_{val}(\archi,\opWeights(\archi)]$ instead of minimising $\mathcal{L}_{val}(\archi_t,\opWeights(\archi_t)$. 
If $\pi_t$ is concentrated on a fixed  $\archi$ then these two previous expressions would be close. Moreover when $\pi_t$ is concentrated on $\archi$ then $\opWeights_t$ will approximate accurately $\opWeights^\star(\archi)$ after a few steps. Note that this gives an argument for using sampling algorithm that minimise the cumulative regret as they naturally tend to play almost all the time one specific architecture. However there is a potential pitfall of converging to a local minimal solution as $\opWeights_t$ might not have learned well enough to compute accurately the loss of other and potentially better architectures.


\end{document}